\newcommand{\st}{\textstyle}
\newcommand{\N}{\mathbb{N}}
\newcommand{\Tmix}{T_{\rm mix}}
\newcommand{\veps}{\varepsilon}
\newcommand{\ucrl}{\textsc{Ucrl2}}
\newcommand{\meanr}{{r}}
\newcommand{\vect}[1]{\bm{#1}}
\newcommand{\vv}{\vect{v}}
\newcommand{\mat}[1]{\bm{#1}}
\newcommand{\mP}{\mat{P}}
\newcommand{\mI}{\mat{I}}
\newcommand{\tmP}{\ensuremath{\tilde\mP}}
\newcommand{\tpi}[1]{\tilde\pi_{#1}}
\newcommand{\M}{M}
\newcommand{\sM}{\mathcal{M}}
\newcommand{\tM}{\tilde M}
\newcommand{\tMk}{\tM_{k}}
\newcommand{\Ind}[1]{\mathds{1}_{#1}}
\newcommand{\bw}{{\mat w}}
\begin{document} 

\pagestyle{empty}
\title{Regret Bounds for Restless Markov Bandits}
\titlerunning{Regret Bounds for Restless Markov Bandits}
\author{Ronald Ortner\inst{1,2} \and Daniil Ryabko\inst{2} \and Peter Auer\inst{1} \and R{\'e}mi Munos\inst{2}}
\institute{Montanuniversitaet Leoben \and INRIA Lille-Nord Europe, \'{e}quipe SequeL \\
\email{\{rortner,auer\}@unileoben.ac.at, daniil@ryabko.net, remi.munos@inria.fr}
}
\maketitle

\begin{abstract} 
We consider the restless Markov bandit problem, in which the state of each arm evolves
according to a Markov process independently of the learner's actions.  We suggest an algorithm 
that after $T$ steps achieves $\tilde{O}(\sqrt{T})$ regret with respect to the best policy that knows the distributions of all arms. 
No assumptions on the Markov chains are made except that they are irreducible. 
In addition, we show that index-based policies are necessarily suboptimal for the considered problem.
\end{abstract}

\section{Introduction}\label{sec:intro}

In the bandit problem the learner has to decide at time steps $t=1,2,\ldots$ which of the finitely many available arms  to pull.
Each arm produces a reward in a stochastic manner. The goal is to maximize the reward accumulated over time.

Following \cite{laro}, traditionally it is assumed that the rewards produced by each given arm are independent and identically distributed (i.i.d.).
If the probability distributions of the rewards of each arm are known, the best strategy is to only pull the arm with the highest expected reward. 
Thus, in the i.i.d.\ bandit setting the \textit{regret} is measured with respect to the best arm. 
  An extension of this setting is to assume 
that the rewards generated by each arm are not i.i.d., but are governed by some more complex stochastic process.
Markov chains suggest themselves as an interesting and non-trivial model.
In this setting it is often natural to assume that the stochastic process (Markov chain) governing each arm does not depend on the actions 
of the learner. That is, the chain takes transitions independently of whether the learner pulls that arm or not (giving the name {\em restless bandit} to the problem).
The latter property makes the problem rather challenging: since we are not observing the state of each arm, the problem becomes 
a partially observable Markov decision process (POMDP), rather than being a (special case of) a fully observable MDP, as in
 the traditional i.i.d.\ setting.
One of the applications that motivate the restless bandit problem is the so-called {\em cognitive radio} problem (e.g., \cite{akyildiz}): 
Each arm of the bandit is a radio channel that can be busy or available. The learner (an appliance) can only sense a certain 
number of channels (in the basic case only a single one) at a time, which is equivalent to pulling an arm. It is natural to assume 
that whether the channel is busy or not at a given time step depends on the past~--- so a Markov chain is the simplest realistic model~--- but 
does not depend on which channel the appliance is sensing. (See also Example~1 in Section~\ref{sec:ex} for an illustration of a simple instance of this problem.)

What makes the restless Markov bandit problem particularly interesting is that {\em one can do much better than pulling the best arm}. This can be seen already 
on simple examples with two-state Markov chains (see Section~\ref{sec:ex} below).   Remarkably, this feature is often overlooked, 
notably by some early work on restless bandits, e.g. \cite{Anantharam}, where the regret is measured 
with respect to the mean reward of the best arm.  This feature  also makes the problem more difficult and in some sense
more general than the non-stochastic bandit problem, in  which the regret usually is measured with respect to the best arm in hindsight \cite{acfs}.
Finally, it is also this feature that makes the problem principally different from the so-called {\em rested} bandit problem, in which 
each Markov chain only takes transitions when the corresponding arm is pulled. 

 Thus, in the restless Markov bandit problem that we study, the regret should be measured not with respect to the best arm, 
but with respect to the best policy knowing the distribution of all arms. 
To understand what kind of regret bounds can be obtained in this setting, it is useful to compare it to the i.i.d.\ bandit 
problem and to the problem of learning an MDP. In the i.i.d.\ bandit problem, the minimax regret expressed in terms 
of the horizon~$T$ and the number of arms  only is $O(\sqrt T)$, cf.~\cite{aubu}. If we allow problem-dependent constants into consideration, then 
the regret becomes of order $\log T$ but depends also on the gap between the expected reward of the best and the second-best arm.
In the problem of learning to behave optimally in an MDP, nontrivial problem-independent finite-time  regret guarantees (that is,
regret depending only on $T$ and the number of states and actions) are not possible to achieve. It is possible to 
obtain $O(\sqrt T)$ regret bounds that also depend on the diameter of the MDP \cite{jaorau} or similar related constants, such as the span of the optimal bias vector \cite{regal}.
Regret bounds of order $\log T$ are only possible if one additionally allows into consideration constants expressed in terms of policies, such as the gap between the average reward obtained by the best 
and the second-best policy \cite{jaorau}. The difference between these constants and constants such as the diameter of an MDP is that 
one can try to estimate the latter, while estimating the former is at least as difficult as solving the original problem~--- finding the best policy.
Turning to our restless Markov bandit problem, so far, to the best of our knowledge no regret bounds are available for the general 
problem. However, several special cases have been considered. Specifically, $O(\log T)$ bounds have been  
obtained in \cite{teli2} and \cite{filippi}. While the latter considers the two-armed restless bandit case, 
the results of~\cite{teli2} are constrained by some ad hoc assumptions on the transition probabilities
and on the structure of the optimal policy of the problem. Also the dependence of the regret bound on the problem parameters
is unclear, while computational aspects of the algorithm (which alternates exploration and exploitation steps) are neglected.
Finally, while regret bounds for the Exp3.S algorithm \cite{acfs} could be applied, these depend on the ``hardness'' of 
the reward sequences, which in the case of reward sequences generated by a Markov chain can be arbitrarily high.

Here we present an algorithm for which we derive $\tilde{O}(\sqrt{T})$
regret  bounds, making no assumptions on the distribution of the Markov chains. 
The algorithm is based on constructing an approximate MDP representation of the POMDP problem, 
and then using a modification of the \ucrl\ algorithm of \cite{jaorau} to learn this approximate MDP. 
 In addition to the horizon~$T$ and the number of arms and states, the regret bound also depends 
 on the diameter and the mixing time (which can be eliminated however) of the Markov chains of the arms.
 If the regret has to be expressed only in these terms, then our lower bound shows that the 
 dependence on $T$ cannot be significantly improved.

\section{Preliminaries}\label{sec:setting}
Given are $K$ arms, where underlying each arm $j$ there is an irreducible Markov chain with state space $S_j$ 
and transition matrix $P_j$. For each state $s$ in $S_j$ there are mean rewards $r_j(s)$, which we assume to be bounded in $[0,1]$.
For the time being, we will assume that the learner knows the number of states for each arm
and that all Markov chains are aperiodic.
In Section~\ref{sec:proofs}, we discuss periodic chains, while in Section~\ref{sec:unknownS} 
we indicate how to deal with unknown state spaces.
In any case, the learner knows neither the transition probabilities nor the mean rewards. 

For each time step $t=1,2,\ldots$ the learner chooses one of the arms, observes the current state $s$
of the chosen arm $i$ and receives a random reward with mean~$r_i(s)$. After this, the state of each arm~$j$
changes according to the transition matrices $P_j$. The learner however is not able to observe the 
current state of the individual arms.
We are interested in competing with the optimal policy $\pi^*$ which knows the mean rewards and transition
matrices, yet observes as the learner only the current state of the chosen arm. 
Thus, we are looking for 
algorithms which after any $T$ steps have small regret with respect to $\pi^*$, i.e.\ minimize
\[
   \st T\cdot \rho^* - \sum_{t=1}^T r_t,
\]
where $r_t$ denotes the (random) reward earned at step $t$ and $\rho^*$ is the average reward of the optimal policy $\pi^*$.
(It will be seen in Section~\ref{sec:alg} that $\pi^*$ and $\rho^*$ are indeed  well-defined.) 

\subsubsection{Mixing Times and Diameter}
If an arm $j$ is not selected for a large number of time steps, the distribution over states 
when selecting $j$ will be close to the stationary distribution $\mu_j$ of the Markov chain underlying arm $j$.
Let $\mu_{s}^t$ be the distribution after $t$ steps when starting in state $s\in S_j$. Then setting
\[
    d_j(t):= \max_{s\in S_j} \|\mu_{s}^t - \mu_j\|_1:= \max_{s\in S_j} \sum_{s'\in S_j} |\mu_{s}^t(s') - \mu_j(s')|,
\]
we define the \textit{$\veps$-mixing time} of the Markov chain as
\[
    T^j_{\rm mix}(\veps) := \min \{t \in \N \,|\, d_j(t)\leq \veps \}.
\]
Setting somewhat arbitrarily \textit{the} mixing time of the chain to 
$T^j_{\rm mix}:=T^j_{\rm mix}(\frac14)$, one can show (cf.~eq.~4.36 in \cite{lepewi}) that 
\begin{equation}\label{eq:tmix}
    T^j_{\rm mix}(\veps)  \leq \left\lceil \log_2 \tfrac{1}{\veps} \right\rceil \cdot T^j_{\rm mix}.
\end{equation}
Finally, let $T_j(s,s')$ be the expected time it takes in arm $j$ to reach $s'$ when starting in $s$.
We set the \textit{diameter} of arm $j$ to be $D_j:=\max_{s,s'\in S_j} T_j(s,s')$.

\section{Examples}\label{sec:ex}
Next we present a few examples that give insight into the nature of the problem and the difficulties 
in finding solutions. In particular, the examples demonstrate that (i) the optimal reward can be (much) bigger
than the average reward of the best arm, (ii) the optimal policy does not maximize the immediate reward, (iii) 
the optimal policy cannot always be expressed in terms of arm indexes.

\begin{example}\label{ex:sym} In this example the average reward of each of the two arms of a bandit is $\frac12$, but the reward of the optimal 
policy is close to $\frac34$.
 Consider a two-armed bandit. Each arm has two possible states, 0 and 1, which are also the rewards.
Underlying each of the two arms is a (two-state) Markov chain with transition matrix 
$\left(\begin{array}{cc}
  1-\epsilon&\epsilon\\
  \epsilon&1-\epsilon
 \end{array}\right)
$, where $\epsilon$ is small. Thus, a typical trajectory of each arm looks like this:
$
000000000001111111111111111000000000\dots,
$
and the average reward for each arm is $\frac12$. It is easy to see that the optimal policy starts with any arm, and then switches the arm whenever the reward is 0, and otherwise sticks to the same arm. The average reward is close to $\frac34$~--- much larger than the
reward of each arm.

This example has a natural interpretation in terms of {\em cognitive radio}: two radio channels are available, each of which can be either busy (0) or available~(1).
A device can only sense (and use) one channel at a time, and one wants to maximize the amount of time the channel it tries to use is available. 
\end{example}
\begin{example}\label{ex:sym2}
 Consider the previous example, but with $\epsilon$ close to 1. Thus, a typical trajectory of each arm is now 
$
0101010100101	0110\dots,
$ and the optimal policy switches arms if the previous reward was 1 and stays otherwise.
\end{example}

\begin{example}\label{ex:asym} In this example the optimal policy does not  maximize the immediate reward.
Again, consider a two-armed bandit. Arm 1 is as in Example~\ref{ex:sym}, and arm~2 provides Bernoulli i.i.d.\ rewards
with probability $\frac12$ of getting reward 1. The optimal policy (which knows the distributions) will sample arm 1 until it obtains reward~0, when it
switches to arm 2. However, it will sample arm 1 again after some time $t$ (depending on $\epsilon$), and 
only switch back to arm~2 when the reward on arm 1 is 0. Note that
whatever $t$ is, the expected reward for choosing arm~1 will be strictly smaller than $\frac12$, since the last observed reward was 0 
and the limiting probability of observing reward 1 (when $t\to\infty$) is~$\frac12$. At the same time, the expected reward of the second arm is always $\frac12$.
Thus, the optimal policy will sometimes ``explore'' by pulling the arm with the smaller expected reward.
\end{example}

An intuitively appealing idea is to look for an optimal policy in an {\em index} form. That is, for each
arm the policy maintains an index which is a function of time, states, and rewards {\em of this arm only}.
At each time step, the policy samples the arm that has maximal index. This seems promising for at least two reasons: First,
the distributions of the arms are assumed independent, so it may seem reasonable to evaluate them independently as well; 
second, this works in the i.i.d.\ case (e.g., the Gittins index \cite{Gittins} or UCB \cite{acbf}). This idea also motivates 
the setting when just one out of two arms is Markov and the other is i.i.d., see e.g.~\cite{filippi}.
Index policies for restless Markov bandits were also studied in \cite{whittle}.
Despite their intuitive appeal, in general, index policies are suboptimal.
\begin{theorem}\label{thm:index}
 For each index-based policy $\pi$ there is a restless Markov bandit problem in which $\pi$ behaves suboptimally.
\end{theorem}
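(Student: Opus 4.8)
The plan is to isolate a structural rigidity of index policies and then construct an instance on which it is fatal. Fix an arbitrary index-based policy $\pi$ and, for each arm $j$, let $g_j(t,h)$ be the index it assigns to arm $j$ at time $t$ given that $h$ is the sequence of states and rewards $\pi$ has so far observed on arm $j$. Because $\pi$ always pulls an arm of maximal index, the choice it makes between two arms $i$ and $j$ at a fixed time depends on their private histories $h_i,h_j$ only through the sign of $g_i(t,h_i)-g_j(t,h_j)$ (ties settled by a fixed rule). The key consequence is that $\pi$'s action map cannot contain a ``checkerboard'': there are no private histories $a\ne c$ for arm $i$ and $b\ne d$ for arm $j$, reachable at a common time $t$, at which $\pi$ picks $i$ on $(a,b)$ and on $(c,d)$ but $j$ on $(c,b)$ and on $(a,d)$, since the four inequalities would chain into $g_i(t,a)\ge g_j(t,b)\ge g_i(t,c)\ge g_j(t,d)\ge g_i(t,a)$, which is impossible as soon as one link is strict (and any fixed tie-breaking rule, being again comparison-based, cannot rescue the pattern).

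The heart of the proof is to exhibit a two-armed restless bandit whose \emph{optimal} policy $\pi^*$ is forced into such a checkerboard, on situations that recur with positive frequency. I would build the two arms from the two-state chains of Examples~\ref{ex:sym} and~\ref{ex:sym2}~--- a slow, ``sticky'' arm and a fast, near-deterministically flipping arm~--- with the rewards perturbed slightly so that the comparisons come out strict. Informally: the last reward seen on the slow arm tells $\pi^*$ whether that arm has just entered a long good or a long bad stretch, and the last reward seen on the fast arm tells $\pi^*$ whether its next pull there will pay off; and the dynamics and rewards are tuned so that the value of ``one more step on the fast arm'' versus ``commit to / probe the slow arm'' \emph{inverts} depending on the current (unobserved) phase of the other chain. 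In the language above: two belief-configurations $a,c$ of arm~$1$ and $b,d$ of arm~$2$ such that arm~$1$ is the better pull against $b$ but not against $d$, while arm~$2$'s configurations are ranked against arm~$1$ in the order opposite to the one in which arm~$1$'s configurations are ranked against arm~$2$. Since the chains are independent, no index attached to a single arm can reproduce the right ranking.

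Given the instance, the conclusion follows in two further steps. First~--- the computational part~--- one checks that the optimal policy $\pi^*$ (well defined by Section~\ref{sec:alg}) indeed makes the four indicated choices, and that any policy whose action map avoids the checkerboard, in particular any index policy, forfeits at least a fixed amount $\Delta>0$ of expected reward on each visit to one of the four situations; this amounts to comparing the long-run value of the best checkerboard-free behaviour with $\rho^*$. Second, irreducibility of the two chains makes the four situations recur a linear number of times along any trajectory, so $\pi$ suffers regret $\Omega(T)$ and, in particular, does not attain $\rho^*$. The main obstacle is the first of these steps: pinning down one explicit pair of chains and reward functions and proving that its optimal policy is at once checkerboard-shaped and strictly better than every checkerboard-free policy; by comparison the rigidity lemma and the recurrence estimate are routine.
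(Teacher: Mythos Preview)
Your rigidity observation (no index policy can realise a ``checkerboard'') is correct, and the recurrence step would indeed be routine, but the proof is incomplete precisely where you say it is: you never actually exhibit a two-armed instance whose optimal policy has the checkerboard shape. Gesturing at the sticky and flipping chains of Examples~\ref{ex:sym} and~\ref{ex:sym2} is not enough --- the greedy-in-expected-reward rule is itself an index policy, so to force a checkerboard the optimal policy must depart from greedy in a direction that \emph{reverses} with the other arm's belief state, and certifying this means solving the average-reward problem for a concrete pair of chains. Until that computation is actually carried out the argument has a hole at its centre.

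The paper takes a different and much lighter route. Rather than engineer a checkerboard inside one instance, it constructs \emph{two} instances sharing two arms $L$ and $R$ and differing only in whether a third arm $C$ is present; at one fixed belief configuration of $L$ and $R$, the optimal action is $L$ in the three-arm problem (because $C$ and then $R$ will pay off over the next two steps) but $R$ in the two-arm problem (for its information value, as in Example~\ref{ex:asym}). Since an index policy gives $L$ and $R$ the same indices regardless of which other arms are present, it makes the same $L$-versus-$R$ comparison in both problems and is therefore wrong in at least one. This between-instance device reduces the verification to a short lookahead that can be done by hand, whereas your within-instance approach would, if it can be completed, yield the slightly stronger conclusion that a \emph{single} problem defeats every index policy --- at the price of a substantially harder construction.
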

\begin{proof}
 Consider the three bandits L (left), C (center), and R (right) in Figure~\ref{fig:cex},
 where C and R start in the 1 reward state. (Arms $C$ and $R$ can easily be made aperiodic 
 by adding further sufficiently small transition probabilities.) 
 \begin{figure}[t]%
\centering
\scalebox{0.1}{\includegraphics{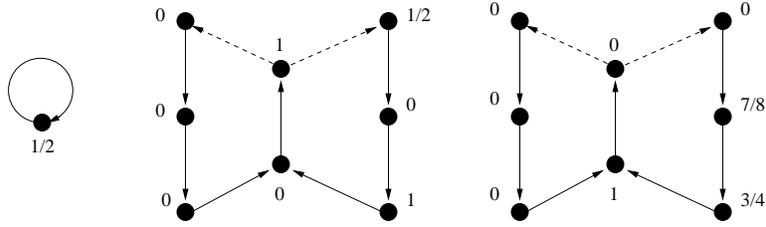}}
\caption{{\em Example 4.} 
 Dashed transitions are with probability $\frac12$, others are deterministic with probability 1.
Numbers are rewards in the respective state.}\label{fig:cex}
\end{figure}
 Assume that C has been observed in the $\frac12$ reward
 state one step before, while R has been observed in the 1 reward state three steps ago.
 The optimal policy will choose arm L which gives reward~$\frac12$ with certainty 
 (C gives reward 0 with certainty, while R gives reward~$\frac78$ with probability $\frac12$)
 and subsequently arms C and R.
 However, if arm C was missing, in the same situation, the optimal policy would choose~R: 
 Although the immediate expected reward is smaller than when choosing L, sampling R
 gives also information about the current state, which can earn reward $\frac34$ a step later.
Clearly, no index based policy will behave optimally in both settings.
\qed
\end{proof}

\section{Main Results}\label{sec:res}
\begin{theorem}\label{thm:regret}
Consider a restless bandit with $K$ aperiodic arms having state spaces~$S_j$, diameters $D_j$, and
mixing times $\Tmix^j$ ($j=1,\ldots,K$). Then
with probability at least $1-\delta$ the regret of Algorithm~\ref{alg:gen} (presented in Section~\ref{sec:alg} below) after $T$ steps is upper bounded by
\[  {\rm const} \cdot S \cdot \Tmix^{3/2} \cdot {\st\prod_{j=1}^K}(4D_j) \cdot \max_i \log (D_i) \cdot \log^{2} \big(\tfrac{T}{\delta}\big) \cdot
       \sqrt{T},  \]
       where $S:=\sum_{j=1}^K |S_j|$ is the total number of states and $\Tmix:=\max_j \Tmix^j$ the maximal mixing time. 
Further, the dependence on $\Tmix$ can be eliminated to show that with probability at least $1-\delta$ the regret is bounded by
\[ {O}\left(  S \cdot \mbox{$\prod_{j=1}^K$}(4D_j)\cdot \max_i \log (D_i) \cdot \log^{7/2} \big(\tfrac{T}{\delta}\big) 
      \cdot \sqrt{T} \right).  \]
\end{theorem}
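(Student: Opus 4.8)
The plan is to turn the POMDP into learning in a finite-state \emph{approximating} MDP and then to run a variant of \ucrl\ on it. \textbf{Constructing the MDP.} For each arm $j$ I would keep as state the last observed state of the arm together with the number $n_j$ of steps elapsed since that observation, truncated at a horizon $H=H(\veps)$; all "stale" configurations with $n_j>H$ are merged and treated as if the arm sits in its stationary distribution $\mu_j$. This gives a finite MDP $\tM$ whose actions are the $K$ arms, where a transition only touches the pulled arm's coordinate (drawing a fresh state from $P_a^{n_a}(s_a,\cdot)$ and resetting its age) and increments the other ages. Since $d_j(H)\le\veps$ as soon as $H\ge T^j_{\rm mix}(\veps)$, the one-step reward and transition kernels of $\tM$ differ in $\ell_1$ by $O(\veps)$ from those of the exact belief-MDP of the POMDP (the latter, and hence $\rho^*$, being well defined as the $H\to\infty$ limit of the same construction); a standard perturbation argument for average-reward MDPs then yields $|\tilde\rho^\ast-\rho^\ast|=O(\veps\cdot D_{\tM})$. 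Picking $\veps=\Theta(1/\sqrt T)$, so that $H=O(\Tmix\log T)$ by~\eqref{eq:tmix}, makes the cumulative approximation loss $O(\sqrt T)$.

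\textbf{Bounding the diameter of $\tM$.} This is the step I expect to be the main obstacle. The point is that one controls only which arm is observed, never the arms' internal transitions, and observing an arm destroys its age; so a prescribed pattern of (last state, age) over all $K$ arms cannot be produced arm by arm independently. The idea I would use is that, because the arms evolve mutually independently, the probability that a \emph{fixed} pulling schedule (the last pull of arm $j$ placed so as to realize its target age, pulls ordered from largest to smallest target age) actually produces the target configuration \emph{factorizes over the arms}. Each factor is the probability that one autonomous chain is in a prescribed state at a prescribed later time, and by the definition of $D_j$ (with a few steps of slack and aperiodicity used to keep these probabilities bounded away from $0$) this is at least of order $1/(4D_j)$. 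Hence after an expected $\prod_j(4D_j)$ restarts of the schedule — each of length $O(H)$ — the target is reached, giving $D_{\tM}\lesssim \Tmix\cdot\log(\cdot)\cdot\prod_j(4D_j)$. Making the "factor $\ge 1/(4D_j)$" claim fully rigorous, i.e.\ lower-bounding short-window occupation probabilities by the diameter, is the delicate part.

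\textbf{Running \ucrl.} Although $\tM$ has up to $H^K\prod_j|S_j|$ states, its unknown parameters are shared: the effect of pulling arm $j$ at age $n$ depends only on $(j,s,n)$, so there are only $O(SH)$ unknown local kernels. I would modify \ucrl's confidence region to be a product of per-arm confidence intervals, one per $(j,s,n)$, and re-run its analysis (extended value iteration plus the optimistic argument), obtaining regret $\tilde O\big(D_{\tM}\sqrt{(\#\text{states})\cdot K\cdot T}\big)$ in which the state count enters only through the $O(SH)$ effective parameters. Substituting $D_{\tM}$, $H=O(\Tmix\log\tfrac T\delta)$ and $\veps=1/\sqrt T$, and adding the $O(\sqrt T)$ approximation loss, gives the first displayed bound; the $\Tmix^{3/2}$ arises from $D_{\tM}\propto\Tmix$ together with $\sqrt{\#\text{states}}\propto\sqrt{\Tmix}$, and the $\max_i\log D_i$ from the accuracy $\log H$ demanded inside the value iteration.

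\textbf{Eliminating $\Tmix$.} Since the algorithm uses $\Tmix$ only to choose $H$, I would wrap it in a guess-and-double scheme over $\Tmix$: run in phases with geometrically increasing guesses $\widehat{\Tmix}$, restarting \ucrl\ at each phase, and cut short the phases whose guess is still too small so that their combined regret is lower-order. There are only $O(\log T)$ phases, so the $\Tmix^{3/2}$ is absorbed into extra $\mathrm{polylog}(T/\delta)$ factors, yielding the second bound. The risk here is keeping the regret accrued while $\widehat{\Tmix}<\Tmix$ under control without pushing the polylog exponent past $7/2$; one could alternatively replace $\Tmix$ by an a~priori bound phrased in the $D_j$.
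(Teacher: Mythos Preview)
Your plan is essentially the paper's: the same belief-state MDP $(s_j,n_j)_j$, the same parameter-sharing/coloring idea for \ucrl, the same choice $\veps=1/\sqrt T$, and the same guess-and-double over $\Tmix$. Two points deserve correction.

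\textbf{The diameter bound.} You correctly isolate the obstacle---lower-bounding, for each arm, the probability of being in a prescribed state at a prescribed time by $1/(4D_j)$---but the route ``by the definition of $D_j$ with a few steps of slack and aperiodicity'' does not work as stated: an expected hitting time $\le D_j$ says nothing directly about the probability of being in a given state at a \emph{fixed} time. The paper's key device is to go through the \emph{stationary distribution}: by the elementary identity $\mu_j(s)=1/\tau_j(s)$ (with $\tau_j(s)$ the expected return time) and the trivial bound $\tau_j(s)\le 2D_j$, one has $\mu_j(s)\ge 1/(2D_j)$. The scheme then first samples arm~$j_1$ (the one with the largest target age) for $\Tmix(\veps)$ steps, which puts \emph{every} arm $\veps$-close to stationarity; thereafter, when arm $j_i$ is pulled at its scheduled time, the chance of seeing $s_{j_i}$ is at least $\mu_{j_i}(s_{j_i})-\veps\ge 1/(4D_{j_i})$ (provided $\veps\le\min_{j,s}\mu_j(s)/2$). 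This is exactly the ``factor $\ge 1/(4D_j)$'' you need, and it makes the factorization argument go through verbatim, giving
\[
D_\veps \;\le\; 2\,\big\lceil\log_2(4\max_j D_j)\big\rceil\cdot \Tmix(\veps)\cdot\prod_{j=1}^K(4D_j).
\]
The case $\veps>\min_{j,s}\mu_j(s)/2$ is handled by monotonicity of $D_\veps$ in $\veps$ together with~\eqref{eq:tmix}, and that is where the extra factor $\max_i\log D_i$ enters.

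\textbf{Where $\max_i\log D_i$ comes from.} It does \emph{not} come from value-iteration accuracy; it appears only in the diameter bound above, via the $\lceil\log_2(4\max_j D_j)\rceil$ needed to push $\Tmix(\veps)$ down to a mixing tolerance comparable to $\min_{j,s}\mu_j(s)$. Your accounting for $\Tmix^{3/2}$ (one power from $D_\veps$ and half a power from the $O(S\cdot\Tmix(\veps))$ colors) and for the elimination of $\Tmix$ by doubling is otherwise in line with the paper.
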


\begin{remark}
For periodic chains the bound of Theorem~\ref{thm:regret} has worse dependence on the state space, for details 
see Remark~\ref{rem:periodic} in Section~\ref{sec:proofs}.
\end{remark}

\begin{theorem}\label{thm:lobo}
 For any algorithm, any $K>1$ and any $m\geq 1$ there is a $K$-armed restless bandit problem with a total number of $S:=Km$ states, such 
 that the regret after $T$ steps is lower bounded by $\Omega(\sqrt{ST})$.
\end{theorem}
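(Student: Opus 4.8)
The plan is to use the probabilistic method. I would construct a family of restless bandit instances indexed by a hidden parameter, show that every algorithm suffers expected regret $\Omega(\sqrt{ST})$ when the parameter is drawn uniformly at random, and conclude that some fixed instance in the family is hard. Throughout one may assume $T\ge S$ (for smaller $T$ the claimed $\Omega(\sqrt{ST})$ bound is only meant up to the trivial $O(T)$ regime). The only genuinely new ingredient is a gadget that inflates the effective number of ``arms'' from $K$ to $Km=S$; after that the argument is the textbook $\sqrt{\#\text{arms}\cdot\text{horizon}}$ bandit lower bound applied in parallel.

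\emph{The instance.} I would let each of the $K$ arms be a \emph{deterministic cycle} on its $m$ states, $0\to 1\to\cdots\to(m-1)\to 0$, with every arm started in state $0$. Each such chain is irreducible; it is periodic, but, exactly as in the proof of Theorem~\ref{thm:index}, a vanishingly small perturbation makes it aperiodic without affecting anything over the horizon $T$ with probability $1-o(1)$. Because the transitions are deterministic, at time $t$ every arm sits in the common phase $p(t):=(t-1)\bmod m$, so the state observed after a pull carries no information, and at each step the learner effectively only chooses which of the $K$ arms to pull while the phase cycles through $0,\dots,m-1$. I would then fix $\veps:=c\sqrt{S/T}=c\sqrt{Km/T}\le\tfrac12$ for a small absolute constant $c$, draw a vector $(j^*_0,\dots,j^*_{m-1})\in\{1,\dots,K\}^m$ uniformly at random, and set the reward of arm $j$ in state $p$ to be Bernoulli with mean $\tfrac12+\veps\,\Ind{j=j^*_p}$.

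\emph{Reduction to $m$ parallel $K$-armed bandits.} The policy that at every step $t$ pulls arm $j^*_{p(t)}$ earns mean reward $\tfrac12+\veps$ at each step and no policy can do better, so $\rho^*=\tfrac12+\veps$, and with $A_t$ the arm pulled at time $t$ the expected regret of any algorithm equals
\[
  \veps\sum_{t=1}^T\Pr\!\big[A_t\neq j^*_{p(t)}\big]
  \;=\;\sum_{p=0}^{m-1}\ \veps\!\!\sum_{t\le T:\,p(t)=p}\!\!\Pr\!\big[A_t\neq j^*_p\big].
\]
For each phase $p$, the rewards seen at phase-$p$ steps depend only on $j^*_p$, which is independent of all observations at steps of other phases; hence the $p$-th summand is precisely the expected regret of a randomised $K$-armed Bernoulli bandit algorithm — the restriction of the learner to phase-$p$ steps, allowed to use the other phases' transcripts as an independent side-randomisation — run for $n_p:=|\{t\le T:p(t)=p\}|\ge\lfloor T/m\rfloor$ rounds against an instance with one arm of mean $\tfrac12+\veps$ and $K-1$ arms of mean $\tfrac12$. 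By the standard multi-armed bandit lower bound (e.g.\ \cite{acfs,aubu}), the choice $\veps\asymp\sqrt{K/n_p}$ makes this at least $c'\sqrt{Kn_p}$ for an absolute constant $c'$. Since all $n_p$ are within $1$ of $T/m$, summing over the $m$ phases gives total expected regret at least $c'\,m\sqrt{K\lfloor T/m\rfloor}=\Omega(\sqrt{KmT})=\Omega(\sqrt{ST})$. Averaging over the uniform hidden parameter, some instance has expected regret $\Omega(\sqrt{ST})$, and since the regret is bounded in absolute value by $T$, a standard truncation turns this into the same bound holding with constant probability.

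\emph{Main obstacle.} Once the deterministic-cycle gadget is in place, the proof is routine; the one point needing care is the reduction step. I would have to check that restricting an arbitrary restless-bandit algorithm to a single phase really produces an admissible $K$-armed bandit algorithm — its phase-$p$ decisions being measurable with respect to the past phase-$p$ rewards together with an independent randomisation — so that the classical $\Omega(\sqrt{Kn})$ lower bound applies verbatim, and that a single constant $\veps$ simultaneously balances all $m$ subproblems. Both hold here because the $n_p$ are essentially equal and the phases are mutually independent, so the cross-phase observations are pure (instance-independent) side information.
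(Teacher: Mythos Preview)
Your proposal is correct and follows essentially the same route as the paper: both use deterministic $m$-cycles on each of the $K$ arms to decompose the horizon into $m$ interleaved, independent $K$-armed Bernoulli bandit problems of length $\approx T/m$, invoke the classical $\Omega(\sqrt{K\cdot T/m})$ lower bound on each, and sum to get $\Omega(\sqrt{KmT})=\Omega(\sqrt{ST})$; the aperiodicity fix via a vanishing self-loop probability is identical as well. Your write-up is considerably more detailed than the paper's three-line sketch; the only point to tighten is the independence claim in the reduction --- cross-phase observations are \emph{not} literally independent of $j^*_p$ (the learner's actions at other phases depend on phase-$p$ rewards), but the standard simulation argument you allude to, conditioning on $(j^*_{p'})_{p'\neq p}$ and treating the other-phase Bernoulli coins as internal randomisation, closes this cleanly.
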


\begin{remark}
 While it is easy to see that lower bounds depend on the total number of states over all arms,
 the dependence on other parameters in our upper bound is not clear.
 For example, intuitively, while in the general MDP case one wrong step may cost up to $D$ --- the MDP's diameter \cite{jaorau} --- 
 steps to compensate for,
 here the Markov chains evolve independently of the learner's actions, and the upper bound's dependence on the diameter 
 may be just an artefact of the proof.
\end{remark}

\section{Constructing the Algorithm}\label{sec:alg}

\subsubsection{MDP Representation}
We represent the setting as an MDP by recalling for each arm the last observed state and the number of time steps
which have gone by since this last observation. Thus, each state of the MDP representation is of the form $(s_{j},n_j)_{j=1}^K:=(s_{1},n_1,s_{2},n_2,\ldots,s_{K},n_K)$
with $s_j\in S_j$ and $n_j\in \N$,
meaning that each arm $j$ has not been chosen for $n_j$ steps when it was in state~$s_{j}$.
More precisely, $(s_{j},n_j)_{j=1}^K$ is a state of the considered MDP if and only if 
(i)~all~$n_{j}$ are distinct and (ii) there is a $j$ with $n_{j}=1$.\footnote{Actually, one would need 
to add for each arm $j$ with $|S_j|>1$ a special state for not having sampled $j$ so far. 
However, for the sake of simplicity we assume that in the beginning each arm is sampled once. The respective regret is negligible.} 
The action space of the MDP is $\{1,2,\ldots,K\}$, and the transition probabilities from a state $(s_{j},n_j)_{j=1}^K$
are given by the $n_j$-step transition probabilities $p_j^{(n_j)}(s,s')$ of the Markov chain underlying the chosen arm $j$ 
(these are defined by the matrix power of the single step transition probability matrix, i.e.\ $P^{n_j}_j$).
That is, the probability for a transition from state $(s_{j},n_j)_{j=1}^K$ to $(s'_{j},n'_j)_{j=1}^K$ under action $j$ 
is given by $p_j^{(n_j)}(s_{j},s'_{j})$ iff (i)~$n'_j=1$, (ii) $n'_\ell=n_\ell+1$ and $s_{\ell}=s'_{\ell}$ for all $\ell\neq j$.
All other transition probabilities are 0.
Finally, the mean reward for choosing arm $j$ in state $(s_{j},n_j)_{j=1}^K$ 
is given by $\sum_{s\in S_j} p_j^{(n_j)}(s_{j},s)\cdot r_j(s)$.
This MDP representation has already been considered in~\cite{teli2}. 

Obviously, within $T$ steps any policy can reach only states with $n_j\leq T$.
Correspondingly, if we are interested in the regret within $T$ steps, it will be
sufficient to consider the finite sub-MDP consisting of states with $n_j\leq T$.
We call this the \textit{$T$-step representation} of the problem, and the regret 
will be measured with respect to the optimal policy in this $T$-step representation.

\subsubsection{Structure of the MDP Representation}

The MDP representation of our problem has some special structural properties.
In particular, rewards and transition probabilities for choosing arm $j$ only
depend on the state of arm $j$, i.e.\ $s_j$ and $n_j$. Moreover, the support 
for each transition probability distribution is bounded, 
and for $n_j\geq \Tmix^j(\veps)$ the transition probability distribution will be close 
to the stationary distribution of arm $j$. 
%
%
\begin{algorithm}[tb]
   \caption{The colored \ucrl\ algorithm}
   \label{alg:col}
\begin{algorithmic}
    \STATE   {\bfseries Input:} Confidence parameter $\delta>0$, aggregation parameter $\veps>0$, state space $S$, action space $A$, coloring and translation functions, a bound $B$ on the size of the support of transition probability distributions.\smallskip
    \STATE   {\bfseries Initialization:}  Set $t:=1$, and observe the initial state $s_1$.\smallskip
    \FOR {episodes $k=1,2,\ldots$} 
        \STATE \textbf{Initialize episode }$k$:\\ 
             Set the start time of episode $k$, $t_k:=t$.
               Let $N_{k}\left (c \right)$ be the number of times a state-action pair of color $c$ has been visited prior to episode~$k$, 
               and $v_k(c)$ the number of times a state-action pair of color $c$ has been visited in episode $k$. 
                Compute estimates $\hat{r}_{k}(s,a)$ and $\hat{p}_{k}(s'|s,a)$ for rewards and transition probabilities, 
                using all samples from state-action pairs of the same color $c(s,a)$, respectively.\smallskip
        \STATE \textbf{Compute policy} $\tpi{k}$:\\ 
        	Let $\mathcal M_k$ be the set of plausible MDPs	with rewards $\tilde r(s, a)$ and 
        	transition probabilities $\tilde{p}(\cdot|s,a)$ satisfying
	\begin{eqnarray}\textstyle
          \label{eq:civR}
             \big\vert
              \tilde r(s, a) - \hat{r}_k(s,a)
            \big \vert \;\;
          & \leq & \;
           \veps + \sqrt{ \tfrac{7\log\left( 2 C t_k / \delta  \right )}
              {2\max\{1,N_k(c(s,a))\}},
            }
          \\
          \label{eq:civ}
           \Big\Vert
             \tilde{p}(\cdot|s,a) - \hat{p}_k(\cdot|s,a)
            \Big \Vert_1
          & \leq &\;
             \veps + \sqrt{ \tfrac{56B\log\left( 4 C t_k / \delta  \right )}
              {\max\{1,N_k(c(s,a))\}}
            }
          \;,
          \end{eqnarray}
          where $C$ is the number of distinct colors. Let $\rho(\pi,M)$ be the average reward of a policy 
          $\pi: S\to A$ on an MDP $M\in\mathcal M_k$. Choose (e.g.\ by extended value iteration~\cite{jaorau})
          an optimal policy $\tpi{k}$ and an optimistic $\tMk\in\mathcal M_k$ such that
          \begin{equation}\label{eq:opt}
          \rho(\tpi{k},\tMk)=\max\{\rho(\pi,M)\,|\,\pi: S\to A,\, M\in\mathcal M_k\}.
          \end{equation}
        \STATE \textbf{Execute policy $\tpi{k}$}:
            \STATE \textbf{while} $v_k(c(s_{t},\tpi{k}(s_t))) < \max\{1,N_{k}(c(s_{t},\tpi{k}(s_{t})))\}$ \textbf{do} 
                 \STATE $\rhd$ Choose action $a_{t} = \tpi{k}(s_{t})$, 
                 obtain reward $r_t$, and observe next state~$s_{t+1}$.	
                 \STATE $\rhd$ Set $t:=t+1$.
             \STATE \textbf{end while}
        \ENDFOR
    \end{algorithmic}
\end{algorithm}
Thus, one could reduce the $T$-step representation further by aggregating states\,\footnote{\label{fn:agg}Aggregation of states $s_1,\ldots,s_n$ means that these states are replaced by a new state~$s_{\rm agg}$ inheriting rewards and transition probabilities from an arbitrary $s_i$ (or averaging over all $s_j$). Transitions to this state are set to 
$p(s_{\rm agg}|s,a):=\sum_{j} p(s_j|s,a)$.} 
$(s_{j},n_j)_{j=1}^K$, $(s'_{j},n'_j)_{j=1}^K$
whenever $n_j,n'_j \geq \Tmix^j(\veps)$ and $s_\ell=s'_\ell$, $n_\ell=n'_\ell$ for $\ell\neq j$.
The rewards and transition probability distributions of aggregated states are $\veps$-close,
so that the error by aggregation can be bounded by results given in~\cite{or-alt07}.
While this is helpful for approximating the problem when all parameters are known,
it cannot be used directly when learning, since the observations in the aggregated states do
not correspond to an MDP anymore. Thus, while standard reinforcement learning algorithms are still applicable,
there are no theoretical guarantees for them.

\subsubsection{$\veps$-structured MDPs and Colored UCRL2}

In the following, we exploit the special structure of the MDP representation. 
We generalize some of its structural properties in the following definition.

\begin{definition}
An $\veps$-structured MDP is an MDP with finite state space $S$, finite action space $A$, transition probability distributions $p(\cdot|s,a)$, mean rewards $r(s,a)\in[0,1]$, and a coloring function $c:S\times A \to \mathcal{C}$, where $\mathcal{C}$ is a set of colors.
 Further, for each two pairs $(s,a)$, $(s',a')\in S\times A$ with $c(s,a)=c(s',a')$ there is a bijective translation function $\phi_{s,a,s',a'}:S\to S$ such that $\sum_{s''}\big|p(s''|s,a) - p(\phi_{s,a,s',a'}(s'')|s',a')\big| < \veps$ and $|r(s,a)-r(s',a')|<\veps$.
\end{definition}

If there are states $s,s'$ in an $\veps$-structured MDP such that $c(s,a)=c(s',a)$ for all actions $a$ and
the associated translation function $\phi_{s,a,s',a}$ is the identity, we may aggregate the states (cf.~footnote~\ref{fn:agg}).
We call the MDP in which all such states are aggregated the \textit{aggregated $\veps$-structured MDP}.\smallskip

For learning in $\veps$-structured MDPs we consider a modification of the \ucrl\ algorithm of~\cite{jaorau}.
The \textit{colored \ucrl} algorithm is shown in Figure \ref{alg:col}. As the original \ucrl\ algorithm it 
maintains confidence intervals for rewards and transition probabilities which define a set of plausible MDPs $\mathcal M_k$.
In each episode $k$, the algorithm chooses an optimistic MDP $\tMk\in\mathcal M_k$ and an optimal policy which maximize
the average reward, cf.~\eqref{eq:opt}. Colored \ucrl\ calculates estimates from all samples of state-action pairs of the 
same color, and works with respectively adapted confidence intervals and a corresponding adapted episode termination criterion.
Basically, an episode ends when for some color $c$ the number of visits in state-action pairs of color $c$ has doubled.

\begin{algorithm}[tb]
   \caption{The restless bandits algorithm}
   \label{alg:gen}
\begin{algorithmic}
    \STATE   {\bfseries Input:} Confidence parameter $\delta>0$, the number of states $S_j$ and mixing time
    $\Tmix^j$ of each arm $j$, horizon $T$. \smallskip
    \STATE $\rhd$ Choose $\veps=1/\sqrt{T}$ and execute colored \ucrl\ (with confidence parameter $\delta$) on 
 	the $\veps$-structured MDP described in the ``coloring'' paragraph at the end of Section~\ref{sec:alg}.
\end{algorithmic}
\end{algorithm}

\subsubsection{Coloring the $T$-step representation}
Now, we can turn the $T$-step representation into an $\veps$-structured
MDP, assigning the same color to state-action pairs where the chosen
arm is in the same state, that is, $c((s_i,n_i)_{i=1}^K, j)= c((s'_i,n'_i)_{i=1}^K, j')$ iff
$j=j'$, $s_j=s'_j$, and either $n_j=n'_j$ or $n_j,n'_j\geq \Tmix^j(\veps)$.
The translation functions are chosen accordingly.
This $\veps$-structured MDP can be learned with colored \ucrl, see Algorithm~\ref{alg:gen}, 
our restless bandits algorithm.
(The dependence on the horizon $T$ and the mixing times $\Tmix^j$ as input parameters can be 
eliminated, cf.\ the proof of Theorem~\ref{thm:regret} in Section~\ref{sec:proofs}.)

\section{Regret Bounds for Colored UCRL2}
The following is a generalization of the regret bounds for \ucrl\ to 
$\veps$-structured MDPs. The theorem gives improved (with respect to \ucrl) 
 bounds if there are only a few parameters to estimate in the MDP to learn.
Recall that the \textit{diameter} of an MDP is the maximal expected transition time between any two states (choosing an appropriate policy), 
cf.~\cite{jaorau}.

\begin{theorem}\label{thm:aggsamples}
Let $M$ be an $\veps$-structured MDP with finite state space $S$, finite action space $A$, transition probability distributions $p(\cdot|s,a)$, mean rewards $r(s,a)\in[0,1]$, coloring function $c$ and
associate translation functions. Assume the learner has complete knowledge of state-action pairs $\Psi_K\subseteq S\times A$, while the state-action pairs in $\Psi_U:=S\times A \setminus \Psi_K$ are unknown and have to be learned. However, the learner knows $c$ and all associate translation functions as well as an upper bound~$B$ on the size of the support of each transition probability distribution in~$\Psi_U$.
Then with probability at least $1-\delta$, after any $T$ steps colored \ucrl\;\footnote{For the sake of simplicity the algorithm was given for the case $\Psi_K=\varnothing$. It is obvious how to extend the algorithm when some parameters are known.} gives regret upper bounded by
\[
   42 D_\veps \sqrt{B C_U T \log\big(\tfrac{T}{\delta}\big)} + \veps (D_\veps+2) T,
\]
where $C_{U}$ is the total number of colors for states in $\Psi_U$, and $D_\veps$ is the diameter of the aggregated $\veps$-structured MDP.
\end{theorem}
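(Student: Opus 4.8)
The proof should follow the original \ucrl\ regret analysis of \cite{jaorau}, but carefully tracking the effect of (a) the coloring, which lets us pool samples across same-colored state-action pairs and hence tightens the confidence intervals, and (b) the $\veps$-slack in rewards and transitions, which introduces an additive bias per step. First I would set up the failure event: using Hoeffding/Weissman-type concentration together with a union bound over all colors and all times $t_k$, I would show that with probability at least $1-\delta$ the true MDP $M$ (after the $\veps$-perturbation, so really: some MDP within $\veps$ of $M$ in the sense of the definition) lies in the plausible set $\mathcal M_k$ for every episode $k$. The point is that the confidence radii in \eqref{eq:civR} and \eqref{eq:civ} scale with $N_k(c(s,a))$, the number of samples of the \emph{color}, not of the individual pair; this is exactly where the $B$ (support bound) and $C_U$ (number of unknown colors) enter instead of $|S|$ and $|S\times A|$.

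Next I would decompose the regret over episodes. On the good event, optimism gives $\rho(\tpi k,\tMk)\ge\rho^*$ up to the $\veps$-bias coming from the fact that $\tMk$ only needs to be $\veps$-close, not exactly equal, to a plausible MDP — this accounts for part of the $\veps(D_\veps+2)T$ term. Within an episode, I would bound the per-step regret by the difference between the average reward of $\tpi k$ on $\tMk$ and the actual accumulated reward, and expand this using the Poisson/bias equation for $\tMk$ exactly as in \cite{jaorau}: the transition-probability deviation term $\sum_t \big(\tilde p(\cdot\mid s_t,a_t)-p(\cdot\mid s_t,a_t)\big)^\top \tilde h_k$ is controlled by the $\ell_1$ confidence width times the span of the bias vector $\tilde h_k$, and the latter is at most $D_\veps$, the diameter of the \emph{aggregated} $\veps$-structured MDP — crucially we may use the aggregated diameter because $\tpi k$ together with the translation/coloring structure behaves identically on aggregated states, so the relevant bias span does not see the (possibly huge, up to $T$) unaggregated state count. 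The reward-deviation term is handled the same way with the radius from \eqref{eq:civR}. A telescoping/martingale argument (Azuma–Hoeffding) handles the difference between expected and realized one-step transitions, contributing a lower-order $\sqrt{T\log(T/\delta)}$ term.

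The remaining work is the standard summation: $\sum_k \sum_{(s,a)\text{ of color }c} v_k(c)/\sqrt{N_k(c)} = O\!\big(\sqrt{C_U T}\big)$ by the doubling episode criterion (an episode ends when some color's visit count doubles, so there are $O(C_U\log T)$ episodes and the usual $\sum 1/\sqrt{\,\cdot\,}$ bound applies per color), and multiplying by the $\sqrt{B\log(T/\delta)}$ factor from the transition radius and by $D_\veps$ gives the leading $42 D_\veps\sqrt{BC_UT\log(T/\delta)}$ term; the $\veps$ terms collect into $\veps(D_\veps+2)T$ (one $\veps D_\veps T$ from the transition bias against the bias vector, one $\veps T$ from the reward bias, and one $\veps T$ from the optimism slack, say). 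The main obstacle I anticipate is the bias-vector bound: one must argue rigorously that the span of $\tilde h_k$ for the optimistic MDP on the \emph{non-aggregated} $T$-step representation is nonetheless bounded by the diameter $D_\veps$ of the \emph{aggregated} MDP — i.e.\ that value iteration on $\tMk$ cannot exploit the fine $n_j$-distinctions beyond the mixing horizon — which requires relating the optimistic MDP's connectivity to that of the aggregated one and checking that extended value iteration respects the coloring, and then handling the extra $\veps$-distortion in that comparison. Everything else is a careful but routine adaptation of \cite{jaorau}.
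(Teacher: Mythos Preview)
Your plan is correct and matches the paper's proof closely. Two refinements are worth noting. First, the concentration step needs more care than ``Hoeffding/Weissman-type'': samples sharing a color are drawn from \emph{different} (only $\veps$-close) distributions and hence are not i.i.d., so Weissman's inequality does not apply directly. The paper instead writes the $\ell_1$ deviation of $\hat p$ from its expectation as a supremum over $x\in\{0,1\}^{B}$, applies Azuma--Hoeffding to the resulting martingale-difference sequence for each fixed $x$, and union-bounds over the $2^B$ choices (this is where $B$ enters the transition radius); the bias $\|\mathbb{E}[\hat p]-p\|_1<\veps$ is then added separately. Second, the bias-span bound you flag as the main obstacle is simpler than you anticipate: states that get aggregated share the same color for every action (with identity translation), hence identical estimates $\hat r_k,\hat p_k$ and identical confidence widths, so extended value iteration assigns them equal values $u_i(s)=u_i(s')$ at every iteration, and the usual diameter argument immediately gives span at most $D_\veps$. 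Your $\veps$-bookkeeping is also slightly off --- there is no separate ``optimism slack'' term, since $M\in\mathcal M_k$ exactly; the $2\veps T$ comes from the two sides $|\tilde r-\hat r|$ and $|\hat r-r|$ of the reward confidence, and $\veps D_\veps T$ analogously from the transition confidence --- but the total $\veps(D_\veps+2)T$ is correct.
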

The proof of this theorem is given in the appendix.
\begin{remark}\label{rem:eps=0}
For $\veps=0$, one can also obtain logarithmic bounds analogously to Theorem~4 of \cite{jaorau}. 
With no additional information for the learner one gets the original \ucrl\ bounds (with a slightly larger constant), 
trivially choosing $B$ to be the number of states and assigning each state-action pair an individual color.
\end{remark}

\section{Proofs}\label{sec:proofs}
We start with bounding the diameter in the aggregated $\veps$-structured MDP.
\begin{lemma}\label{prop:Deps}
For $\veps\leq 1/4$, the diameter $D_\veps$ in the aggregated $\veps$-structured MDP can be upper bounded by 
$2\, \big\lceil \log_2(4 \max_j D_j)\big\rceil \cdot \Tmix(\veps)\cdot \prod_{j=1}^K (4D_j)$,
where we set  $\Tmix(\veps):=\max_j \Tmix^j(\veps)$.
\end{lemma}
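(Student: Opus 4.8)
The goal is to bound the expected time to travel between any two states of the aggregated $\veps$-structured MDP obtained from the $T$-step representation. I would first fix a target state $(s_j^*,n_j^*)_{j=1}^K$ and argue how to reach it from an arbitrary starting state. The key structural observation is that arms evolve independently and, crucially, an arm that is \emph{not} pulled just increments its $n$-counter deterministically; so the only thing the policy controls is which arm's ``last observed state'' gets refreshed, and when. I would design a reaching policy that handles the arms essentially one at a time (in a fixed order), and then estimate the time for each arm separately.

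\textbf{Key steps.} (1) \emph{Reaching a prescribed $(s_j,n_j)$ for a single arm.} To force arm $j$ into having last-observed state $s_j^*$ with age exactly $n_j^*$, I pull arm $j$ repeatedly until it is observed in $s_j^*$ — by definition of the diameter $D_j$ of the Markov chain this takes expected time at most $D_j$ — and then I stop pulling it for exactly $n_j^*-1$ steps while I work on other arms. If the required age is ``large'' (i.e.\ $n_j^* \geq \Tmix^j(\veps)$, the aggregated ``old'' class), I do not need a precise age: any sufficiently large age works, and the mixing-time bound guarantees the aggregated old state is reached. (2) \emph{Combining arms.} Since there are $K$ arms and each must simultaneously carry its own counter, I would process the arms in a fixed order, say $j=1,\dots,K$, parking each completed arm and never touching it again; the subtlety is that while I spend time fixing arm $j+1$, the already-fixed arms $1,\dots,j$ keep aging, so I have to fix their \emph{ages} last, or over-shoot deliberately and let them drift into the aggregated old class, or re-synchronize at the end. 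This bookkeeping is where the factor $\prod_{j=1}^K(4D_j)$ comes from: the naive independent-reaching argument multiplies, because to hit a conjunction of independent targets one essentially pays the product of the individual hitting times (a geometric-coupon argument), not the sum. (3) \emph{Plugging in the $\veps$-mixing time.} Using \eqref{eq:tmix} I bound $\Tmix^j(\veps)\leq\lceil\log_2\frac1\veps\rceil\Tmix^j$, and bound the per-arm hitting time by $2D_j$ (or $4D_j$ to absorb the requirement that we also wait out a prescribed age and the $+1$/rounding slack), yielding the claimed product. The logarithmic factor $\lceil\log_2(4\max_j D_j)\rceil$ should appear from how many doublings/how large an age we must enforce to guarantee an arm that needs a ``medium'' age gets there, or from bounding $\Tmix(\veps)$ against the diameters when $\veps=1/\sqrt T$ is later substituted; I would keep $\Tmix(\veps)$ symbolic here since the lemma states the bound in those terms.

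\textbf{Main obstacle.} The genuinely delicate point is the conjunction: because the $n_j$-counters must all be distinct and exactly one must equal $1$, I cannot independently ``set and forget'' each arm — fixing the age of one arm disturbs the ages of all the others. The clean way around this, which I expect the authors use, is to pull the arms in sequence and argue that while we are busy re-hitting arm $j$ (expected time $\le D_j$, but with a geometric tail so that the product structure is unavoidable when we need \emph{all} of them hit), the other arms simply age; then a final short ``repair phase'' re-hits any arm whose age has wandered out of its target, and the recursion over the $K$ arms produces $\prod_j(4D_j)$ rather than $\sum_j 4D_j$. Getting the constants and the $\Tmix(\veps)$ factor to come out exactly as stated (the factor $2$, the ceiling, the role of aperiodicity so that ``sufficiently large age'' is actually attainable as a specific age) is the routine-but-fiddly part I would defer to the written proof.
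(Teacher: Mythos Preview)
Your scheme differs from the paper's and, as written, has a genuine gap. In the paper the reaching policy is: order the arms by \emph{decreasing} target age $n_{j_1}>\cdots>n_{j_K}=1$; pull arm $j_1$ for $\Tmix(\veps)$ steps (so \emph{every} chain is $\veps$-close to stationary); then pull arm $j_i$ exactly $n_{j_{i-1}}-n_{j_i}$ times for $i=2,\dots,K$. This makes the last sample of each arm land at a deterministic time, hence the ages come out exactly right by construction, and the only randomness is whether the last observed state of arm $j_i$ equals $s_{j_i}^*$. Since the chain is near-stationary at that moment, this happens with probability at least $\mu_{j_i}(s_{j_i}^*)-\veps\ge \mu_{j_i}(s_{j_i}^*)/2 \ge 1/(2\tau_{j_i})\ge 1/(4D_{j_i})$, using $\tau_j(s)=1/\mu_j(s)\le 2D_j$. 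If any arm misses, restart the whole scheme; the expected number of restarts is then $\prod_j (4D_j)$ and each trial costs at most $2\Tmix(\veps)$ steps. That is precisely how the product arises --- not from a coupon argument over sequential hitting times, but from demanding that $K$ independent one-shot samples all succeed.

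Your ``pull arm $j$ repeatedly until it shows $s_j^*$'' step takes a \emph{random} amount of time (expected $\le D_j$), and during that random time every previously fixed arm ages by a random amount; so you cannot land the other arms on prescribed finite ages $n_\ell^*<\Tmix^\ell(\veps)$. The ``repair phase'' you mention is then as hard as the original problem, and the recursion you allude to does not obviously terminate with the stated bound. Finally, your guess about the $\lceil\log_2(4\max_j D_j)\rceil$ factor is off: it does not come from a doubling over ages or from substituting $\veps=1/\sqrt{T}$. It comes from the case $\veps>\mu_0:=\tfrac12\min_{j,s}\mu_j(s)$, where the lower bound $\mu_{j_i}(s_{j_i}^*)-\veps$ can be nonpositive; one then uses $D_\veps\le D_{\mu_0}$ together with \eqref{eq:tmix} to write $\Tmix(\mu_0)\le \lceil\log_2(1/\mu_0)\rceil\,\Tmix(1/4)\le \lceil\log_2(4\max_j D_j)\rceil\,\Tmix(\veps)$, which is exactly the extra logarithmic factor in the statement.
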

\begin{proof}
Let $\mu_j$ be the stationary distribution of arm $j$. 
It is well-known that the expected \textit{first return time} $\tau_j(s)$ in state $s$ satisfies $\mu_j(s)=1/\tau_j(s)$.
Set $\tau_j:=\max_{s} \tau_j(s)$, and $\tau:=\max_j \tau_j$. Then, $\tau_j \leq 2 D_j$.

Now consider the following scheme to reach a given state $(s_j,n_j)_{j=1}^K$:
First, order the states $(s_j,n_j)$ descendingly with respect to $n_j$. 
Thus, assume that  $n_{j_1}>n_{j_2}>\ldots>n_{j_K}=1$.
Take $\Tmix(\veps)$ samples from arm~$j_1$. (Then each arm will be $\veps$-close to the stationary distribution,
and the probability of reaching the right state $s_{j_i}$ when sampling arm $j_i$ afterwards is at least $\mu_{j_i}(s_{j_i})-\veps$.)
Then sample each arm $j_2,j_3,\ldots$ exactly $n_{j_{i-1}}-n_{j_i}$ times.

We first show the lemma for $\veps\leq \mu_{0}:=\min_{j,s} \mu_j(s)/2$.
As observed before, for each arm $j_i$ the probability of reaching the right state $s_{j_i}$ 
is at least $\mu_{j_i}(s_{j_i})-\veps \geq \mu_{j_i}(s_{j_i})/2$.
Consequently, the expected number of restarts of the scheme necessary to reach a particular state $(s_j,n_j)_{j=1}^K$ 
is upper bounded by $\prod_{j=1}^K 2/\mu_j(s_j)$. 
As each trial takes at most $2\Tmix(\veps)$ steps, recalling that $1/\mu_j(s)=\tau_j(s)\leq 2D_j$ proves the bound
for $\veps\leq \mu_{0}$.

Now assume that $\veps>\mu_0$. Since $D_{\veps}\leq D_{\veps'}$ for $\veps > \veps'$
we obtain a bound of $2\Tmix(\veps') \prod_{j=1}^K (4D_j)$ with $\veps':=\mu_0=1/2\tau$.
By \eqref{eq:tmix}, we have $\Tmix(\veps')\leq$\linebreak $\lceil \log_2 (1/\veps')\rceil \, \Tmix(1/4) \leq \lceil \log_2 (4\tau)\rceil \,\Tmix(\veps)$,
which proves the lemma.
\qed
\end{proof}\smallskip

\noindent
{\textbf{Proof of Theorem \ref{thm:regret}.}} 
Note that in each arm~$j$ the support of the transition probability distribution is upper bounded by $|S_j|$.
Hence, Theorem~\ref{thm:aggsamples} with $C_U=\sum_{j=1}^K |S_j|\,\Tmix^j(\veps)$ 
and $B=\max_j |S_j|$ shows that  the regret is bounded by
$42 D_\veps  \sqrt{\max_i |S_i| \cdot {\st\sum_{j=1}^K} |S_j|\cdot\Tmix^j(\veps) \cdot T \log\big(\tfrac{T}{\delta}\big)} 
          		+ \veps (D_\veps+2) T$
with probability $\geq 1-\delta$.
Since $\veps=1/\sqrt{T}$, this proves the first bound by Lemma~\ref{prop:Deps} and recalling~\eqref{eq:tmix}.

If the horizon $T$ is not known, guessing $T$ using the doubling trick 
(i.e., executing the algorithm for $T=2^i$ with confidence parameter $\delta/2^i$ in rounds $i=1,2,\ldots$) 
achieves the bound given in Theorem~\ref{thm:regret}
 with worse constants. 
 
 Similarly, if $\Tmix$ is unknown, one can perform the algorithm in rounds $i=1,2,\ldots$ of length $2^i$ with confidence parameter $\delta/2^i$, 
 choosing an increasing function $a(t)$ to guess an upper bound on $\Tmix$ at the beginning~$t$ of each round.
  This gives a bound of order 
 $a(T)^{3/2}\sqrt{T}$ with a corresponding additive constant. In particular, choosing $a(t)=\log t$
the regret is bounded by
 ${O}\big(S \cdot \prod_{j=1}^K(4D_j)\cdot\max_i\log(D_i)\cdot \log^{7/2}(T/\delta)\cdot\sqrt{T}\big)$
 with probability $\geq 1-\delta$.
\qed

\begin{remark}\label{rem:reversible}
Whereas it is not easy to obtain upper bounds on the mixing time in general, 
for \textit{reversible} Markov chains $\Tmix$ can be linearly upper bounded by the diameter, 
cf.~Lemma 15 in Chapter 4 of \cite{alfill}. While it is possible to compute an upper 
bound on the diameter of a Markov chain from samples of the chain, we did not 
succeed in deriving any useful results on the quality of such bounds.
\end{remark}

\begin{remark}\label{rem:periodic}
 Periodic Markov chains do not converge to a stationary distribution. However taking into account 
the period of the arms, one can generalize our results to the periodic case. Considering
in an $m$-periodic Markov chain the $m$-step transition probabilities given by the matrix $P^m$, one obtains
$m$ distinct aperiodic chains (depending on the initial state) each of which converges to a stationary distribution with respective
mixing times. The maximum over these mixing times can be considered to be \textit{the} mixing time of the chain.

Thus, instead of aggregating states $(s_j,n_j)$, $(s'_j,n'_j)$ with $n_j,n'_j\geq \Tmix^j(\veps)$ as in the case 
of aperiodic chains, one aggregates them only if additionally $n_j\equiv n'_j \mod m_j$.
If the periods $m_j$ are not known to the learner, one can use the least common denominator of
$1,2,\ldots,|S_j|$ as period. Since by the prime number theorem the latter is exponential in $|S_j|$,
the obtained results for periodic arms show worse dependence on the number of states.
(Concerning the proof of Lemma~\ref{prop:Deps} the sampling scheme has to be slightly adapted 
so that one samples in the right period when trying to reach a particular state.)
\end{remark}

\noindent\textbf{Proof of Theorem~\ref{thm:lobo}.}
   Consider $K$ arms 
 all of which are deterministic cycles of length $m$ and hence $m$-periodic. Then the learner
 faces $m$ distinct learning problems with $K$ arms, each of which can be made to force regret of order $\Omega(\sqrt{KT/m})$   
 in the $T/m$ steps the learner deals with the problem \cite{acfs}. Overall, this gives the claimed bound of $\Omega(\sqrt{mKT})=\Omega(\sqrt{ST})$.
 Adding a sufficiently small probability (with respect to the horizon $T$) of staying in some state of each arm,
 one obtains the same bounds for aperiodic arms.
\qed

\section{Extensions and Outlook}
\noindent{\bf Unknown state space.} \label{sec:unknownS}
 If (the size of) the state space of the individual arms is unknown,
some additional exploration of each arm will sooner or later determine the state space.
Thus, we may execute our algorithm on the known state space where between two episodes
we sample each arm until all known states have been sampled at least once.
The additional exploration is upper bounded by $O(\log T)$, as there are only 
$O(\log T)$ many episodes, and the time of each exploration phase can be bounded
with known results.
 That is, the expected number of exploration steps needed 
until all states of an arm $j$ have been observed is upper bounded by $D_j \log (3|S_j|)$ (cf.\ Theorem~11.2 of~\cite{lepewi}),
while the deviation from the expectation can be dealt with by Markov inequality or results from~\cite{ald}. 
That way, one obtains bounds as in Theorem~\ref{thm:regret} for the case of unknown state space.\smallskip

\noindent{\bf Improving the bounds.}
All parameters considered, there is still a large gap between the lower and the upper bound on the regret.
As a first step, it would be interesting to find out whether the dependence
on the diameter of the arms is necessary.
Also, the current regret bounds do not make use of the interdependency of the 
transition probabilities in the Markov chains and treat $n$-step and $n'$-step 
transition probabilities independently. 
Finally, a related open question is how to obtain estimates and upper bounds on mixing times.\smallskip

 \noindent{\bf More general models.} After considering bandits with  i.i.d.\ and Markov arms, the next natural step is to 
 consider more general time-series distributions. Generalizations are not straightforward: already for the case of   Markov 
 chains of order (or memory) 2 the MDP representation of the problem (Section~\ref{sec:alg}) breaks down, and so the approach 
 taken here cannot be easily extended. Stationary ergodic distributions are an interesting more general case, for which
the first question is whether it is possible  to obtain asymptotically sublinear regret.

\subsubsection*{Acknowledgments.}
{\small This research 
was funded by the Ministry of Higher Education and Research, Nord-Pas-de-Calais Regional Council and 
 FEDER (Contrat de Projets Etat Region CPER  2007-2013),
  ANR  projects EXPLO-RA (ANR-08-COSI-004),
  Lampada (ANR-09-EMER-007)  and CoAdapt,
  and by the European Community's  FP7 Program  under grant agreements 
 n$^\circ$\,216886 (PASCAL2) and n$^\circ$\,270327 (CompLACS).
 The first author is currently funded by the Austrian Science Fund (FWF): J~3259-N13.}

\bibliographystyle{splncs}
\bibliography{RL}

\begin{thebibliography}{10}

\bibitem{laro}
Lai, T.L., Robbins, H.:
\newblock Asymptotically efficient adaptive allocation rules.
\newblock Adv.\ in Appl.\ Math. \textbf{6} (1985)  4--22

\bibitem{akyildiz}
Akyildiz, I.F., Lee, W.Y.L.W.Y., Vuran, M.C., Mohanty, S.:
\newblock A survey on spectrum management in cognitive radio networks.
\newblock IEEE Commun.\ Mag. \textbf{46}(4) (2008)  40--48

\bibitem{Anantharam}
Anantharam, V., Varaiya, P., Walrand, J.:
\newblock Asymptotically efficient allocation rules for the multiarmed bandit
  problem with multiple plays, part {II}: {M}arkovian rewards.
\newblock IEEE Trans.\ Automat.\ Control \textbf{32}(11) (1987)  977--982

\bibitem{acfs}
Auer, P., Cesa-Bianchi, N., Freund, Y., Schapire, R.E.:
\newblock The nonstochastic multiarmed bandit problem.
\newblock SIAM J.\ Comput. \textbf{32} (2002)  48--77

\bibitem{aubu}
Audibert, J.Y., Bubeck, S.:
\newblock Minimax policies for adversarial and stochastic bandits.
\newblock In: colt2009. Proc.\ 22nd Annual Conf.\ on Learning Theory. (2009)
  217--226

\bibitem{jaorau}
Jaksch, T., Ortner, R., Auer, P.:
\newblock Near-optimal regret bounds for reinforcement learning.
\newblock J.\ Mach.\ Learn.\ Res. \textbf{11} (2010)  1563--1600

\bibitem{regal}
Bartlett, P.L., Tewari, A.:
\newblock {REGAL}: A regularization based algorithm for reinforcement learning
  in weakly communicating {MDP}s.
\newblock In: Proc.\ 25th Conference on Uncertainty in Artificial Intelligence,
  UAI 2009, AUAI Press (2009)  35--42

\bibitem{teli2}
Tekin, C., Liu, M.:
\newblock Adaptive learning of uncontrolled restless bandits with logarithmic
  regret.
\newblock In: 49th Annual Allerton Conference, IEEE (2011)  983--990

\bibitem{filippi}
Filippi, S., Cappé~and, O., Garivier, A.:
\newblock Optimally sensing a single channel without prior information: The
  tiling algorithm and regret bounds.
\newblock IEEE J.\ Sel.\ Topics Signal Process. \textbf{5}(1) (2011)  68--76

\bibitem{lepewi}
Levin, D.A., Peres, Y., Wilmer, E.L.:
\newblock {Markov chains and mixing times}.
\newblock American Mathematical Society (2006)

\bibitem{Gittins}
Gittins, J.C.:
\newblock Bandit processes and dynamic allocation indices.
\newblock J.\ R.\ Stat.\ Soc.\ Ser.\ B Stat.\ Methodol. \textbf{41}(2) (1979)
  148--177

\bibitem{acbf}
Auer, P., Cesa-Bianchi, N., Fischer, P.:
\newblock Finite-time analysis of the multi-armed bandit problem.
\newblock Mach. Learn. \textbf{47} (2002)  235--256

\bibitem{whittle}
Whittle, P.:
\newblock Restless bandits: Activity allocation in a changing world.
\newblock J.\ Appl.\ Probab. \textbf{25} (1988)  287--298

\bibitem{or-alt07}
Ortner, R.:
\newblock Pseudometrics for state aggregation in average reward {M}arkov
  decision processes.
\newblock In: Proc.\ 18th International Conf.\ on Algorithmic Learning Theory,
  ALT 2007, Springer (2007)  373--387

\bibitem{alfill}
Aldous, D.J., Fill, J.:
\newblock Reversible Markov Chains and Random Walks on Graphs.
\newblock (in preparation)
  http://www.stat.berkeley.edu/$\sim$aldous/RWG/book.html.

\bibitem{ald}
Aldous, D.:
\newblock Threshold limits for cover times.
\newblock J.\ Theoret.\ Probab. \textbf{4} (1991)  197--211

\end{thebibliography}

\begin{appendix}
\section{Proof of Theorem~\ref{thm:aggsamples}}\label{app:aggsamples}
\subsubsection{Splitting into Episodes}
We follow the proof of Theorem~2 in \cite{jaorau}. 
First, as shown in Section 4.1 of \cite{jaorau},
setting $\Delta_k:= \sum_{s,a} v_k(s,a)  ( \rho^* -  \meanr(s,a))$
with probability at least $1-\tfrac{\delta}{12T^{5/4}}$ the regret after $T$ steps can be upper bounded by 
  \begin{equation}\label{eq:r-bound}
    \st\sum_{k=1}^m \Delta_k + {\sqrt{\tfrac{5}{8} T \log \left(\tfrac{8T}{\delta}\right)}}\;.
  \end{equation}
   
\subsubsection{Failing Confidence Intervals}
Concerning the regret with respect to the true MDP~$M$ being not contained in 
  the set of plausible MDPs~${\mathcal{M}_{k}}$, we cannot use the same 
  argument (that is, Lemma 17 in Appendix C.1) as in \cite{jaorau}, since the random
  variables we consider for rewards and transition probabilities are
  independent, yet not identically distributed.
  
  Instead, fix a state-action pair $(s,a)$, let $S(s,a)$ be the set of states $s'$ with $p(s'|s,a)>0$ and 
  recall that $\hat{r}(s,a)$ and $\hat{p}(\cdot|s,a)$ are the estimates for rewards and transition probabilities
  calculated from all samples of state-action pairs of the same color~$c(s,a)$. 
Now assume that at step $t$ there have been $n>0$ samples of state-action pairs of color~$c(s,a)$ and
that in the $i$-th sample action $a_i$ has been chosen in state $s_i$ and a transition to state $s'_i$ has been observed $(i=1,\ldots,n)$.
Then
\begin{eqnarray}
	\lefteqn{\Big\| \hat{p}(\cdot|s,a)- \mathbb{E}[\hat{p}(\cdot|s,a)] \Big\|_1  
	=
	\sum_{s'\in S(s,a)} \Big|  \hat{p}(s'|s,a) - \mathbb{E}[\hat{p}(s'|s,a)]  \Big|}  \nonumber
	\\
	&\leq&  
	\sup_{x\in\{0,1\}^{|S(s,a)|}} \sum_{s'\in S(s,a)} \Big(  \hat{p}(s'|s,a) - \mathbb{E}[\hat{p}(s'|s,a)]  \Big)\, x(s')  \nonumber\\
	&=&
	\sup_{x\in\{0,1\}^{|S(s,a)|}} \tfrac{1}{n}\sum_{i=1}^n 
			\Big( x(\phi_{s_i,a_i,s,a}(s'_i))  - \sum_{s'} p(s'|s_i,a_i)\cdot x(\phi_{s_i,a_i,s,a}(s'))  \Big)\,. \quad \label{eq:ah}
\end{eqnarray}
For fixed $x\in\{0,1\}^{|S(s,a)|}$, $X_i := x(\phi_{s_i,a_i,s,a}(s'_i)) - \sum_{s'} p(s'|s_i,a_i)\cdot x(\phi_{s_i,a_i,s,a}(s'))$ 
is a  martingale difference sequence 
 with $|X_i|\leq 2$, so that by Azuma-Hoeffding inequality (e.g., Lemma 10 in \cite{jaorau}),
$\Pr\{\vphantom{X^X_X} \sum_{i=1}^n X_i \geq \theta\} \leq  \exp( -  \theta^2/8n)$ and in particular 
\[
   \st \Pr\Big\{\vphantom{X^X_X} \sum_{i=1}^n X_i \geq \sqrt{56 Bn \log\big( \tfrac{4 t C_U}{\delta}\big)} \Big\} 
    \leq  \Big(\tfrac{\delta}{4 t C_U}\Big)^{7B}
    <  \tfrac{\delta}{2^B 20 t^7 C_U}.
\]
Recalling that by assumption $|S(s,a)|\leq B$, a union bound over all sequences $x\in\{0,1\}^{|S(s,a)|}$ then shows from \eqref{eq:ah} that
 \begin{eqnarray}\label{eq:probc}
   \st \Pr\left\{
      \Big\| \vphantom{X^X_X}
         \hat{p}(\cdot|s,a) - \mathbb{E}[\hat{p}(\cdot|s,a)]
      \Big\|_1
      \geq \sqrt{\frac{56B}{n}\log\left( 4 C_U t / \delta\right ) }
    \right\}
  &  \leq & \st\frac{\delta}{20 t^7 C_U}.
  \end{eqnarray}
  
Concerning the rewards, as in the proof of Lemma 17 in Appendix C.1 of~\cite{jaorau} --- but now using Hoeffding for independent and not necessarily identically distributed random variables --- we have that 
  \begin{eqnarray}\label{eq:rewc}
  \st  \Pr\left\{
      \left\vert \vphantom{X^X_X}
        \hat{r}(s,a) - \mathbb{E}[\hat{r}(s,a)]
      \right\vert
      \geq \sqrt{\frac{7}{2n}\log\left( 2 C_U t / \delta\right ) }
    \right\}
  & \leq & \st\frac{\delta}{60 t^7 C_U}.
  \end{eqnarray}
A union bound over all $t$ possible values for $n$ and all $C_U$ colors of states in~$\Psi_U$ 
shows that the confidence intervals in \eqref{eq:probc} and \eqref{eq:rewc} hold with probability at least $1-\frac{\delta}{15 t^{6}}$
for the actual counts $N(c(s,a))$ and all state-action pairs $(s,a)$. (Note that equations 
\eqref{eq:probc} and \eqref{eq:rewc} are the same for state-action pairs of the same color.)

By linearity of expectation, $\mathbb{E}[\hat{r}(s,a)]$ can be written as $\frac{1}{n} \sum_{i=1}^n r(s_i,a_i)$ for the sampled state-action
pairs $(s_i,a_i)$. Since the $(s_i,a_i)$ are assumed to have the same color~$c(s,a)$, it holds that $|r(s_i,a_i) -r(s,a)|<\veps$  and hence $|\mathbb{E}[\hat{r}(s,a)] - r(s,a)| < \veps$.
Similarly, $\big\|\mathbb{E}[\hat{p}(\cdot|s,a)]-p(\cdot|s,a)\big\|_1 < \veps$. Together with \eqref{eq:probc} and \eqref{eq:rewc}
this shows that with probability at least $1-\frac{\delta}{15 t^{6}}$ for all state-action pairs $(s,a)$ 
  \begin{eqnarray}
      \left\vert \vphantom{X^X_X}
        \hat{r}(s,a) - r(s,a)
      \right\vert
      &<& \veps + \st\sqrt{\frac{7}{2n}\log\left( 2 C_U t / \delta\right ) }  \label{eq:rewc2},  \\
       \Big\| \vphantom{X^X_X}
         \hat{p}(\cdot|s,a) - {p}(\cdot|s,a)
      \Big\|_1
      &<& \veps + \st \sqrt{\frac{56B}{n}\log\left( 4 C_U t / \delta\right ) }. \label{eq:probc2}
  \end{eqnarray} 
Thus, the true MDP is contained in the set of plausible MDPs $\mathcal{M}(t)$ at step $t$
 with probability at least $1-\frac{\delta}{15 t^{6}}$,
just as in Lemma 17 of~\cite{jaorau}. The argument that 
  \begin{equation} \label{eq:confidence}
     \st \sum_{k=1}^m \Delta_k \Ind{M \not\in {\mathcal{M}_{k}}}  \; \leq \; \sqrt{T}
  \end{equation}
   with probability at least $1-\frac{\delta}{12 T^{5/4}}$ then can be taken without any changes from 
   Section 4.2 of \cite{jaorau}.

\subsubsection{Episodes with $M\in{\mathcal{M}_{k}}$}\label{sec:confhold}
Now assuming that the true MDP $M$ is in $\mathcal{M}_{k}$, we first reconsider extended value iteration. 
In Section 4.3.1 of \cite{jaorau} it is shown that
for the state values $u_i(s)$ in the $i$-th iteration it holds that $\max_s u_i(s)-\min_s u_i(s)\leq D$, where $D$ is
the diameter of the MDP. Now we want to replace $D$ with the diameter $D_\veps$ of the aggregated MDP.
For this, first note that for any two states $s,s'$ which are aggregated we have by definition of the aggregated MDP
that $u_i(s) = u_i(s')$.
As it takes at most $D_\veps$ steps on average to reach any aggregated state, repeating the argument of Section~4.3.1 
of \cite{jaorau} shows that
\begin{equation}
    \st \max_s u_i(s)-\min_s u_i(s) \leq D_\veps.
\end{equation}

Let $\tmP_k:=\big(\tilde{p}_{k}(s'|s,\tpi{k}(s))\big)_{s,s'}$ 
    be the transition matrix of $\tpi{k}$ on $\tMk$, and 
  $\vv_k := \big(v_k\big(s,\tpi{k}(s)\big)\big)_s$ 
     the row vector of visit counts in episode~$k$ for each state and the corresponding
    action chosen by~$\tpi{k}$.
Then as shown in Sect.~4.3.1 of~\cite{jaorau}\footnote{Here we neglect the error by 
value iteration explicitly considered in Sect.\ 4.3.1 of~\cite{jaorau}.}
  \begin{eqnarray*}
    \Delta_k  &\leq &  
      \vv_k \big(  \tmP_k -\mI\big)\bw_k
        + \sum_{s,a} v_k(s,a) \big(  \tilde{r}_k(s,a) -\meanr(s,a)\big),
  \end{eqnarray*}
where $\bw_k$ is the normalized state value vector with $w_k(s):=u(s) - (\min_s u(s)-\max_s u(s))/2$, 
so that $\|\bw_k\|\leq \frac{D_\veps}{2}$. 
  Now for $(s,a)\in \Psi_K$ we have $\tilde{r}_k(s,a) = \meanr(s,a)$, while for $(s,a)\in \Psi_U$
 the term $\tilde{r}_k(s,a) -\meanr(s,a) \leq |\tilde{r}_k(s,a) -\hat{r}_k(s,a)| + |\meanr(s,a)-\hat{r}_k(s,a)|$ 
is bounded according to \eqref{eq:civR} and \eqref{eq:rewc2}, as we assume that $\tMk,M\in{\mathcal{M}_{k}}$.
 Summarizing state-action pairs of the same color we get 
\begin{eqnarray*}
\Delta_k & \leq & \vv_k \big(  \tmP_k -\mI\big)\bw_k
        + 2 \sum_{c\in C(\Psi_U)} v_k(c) \cdot \Big(\veps + \sqrt{ \tfrac{7\log\left( 2 C_U t_k / \delta \right )}
              {2\max\{1,N_k(c)\}} }  \Big),
\end{eqnarray*}
where $C(\Psi_U)$ is the set of colors of state-action pairs in $\Psi_U$.
Let $T_k$ be the length of episode $k$.
Then noting that $N_k'(c):=\max\{1,N_k(c)\} \leq t_k \leq T$ we get 
\begin{eqnarray}\label{eq:poisson3}
\Delta_k & \leq & \vv_k \big(  \tmP_k -\mI\big)\bw_k
        + 2\veps T_k + \sqrt{ 14\log\left( \tfrac{2 C_U T}{\delta}  \right )}
        \!\!\! \sum_{c\in C(\Psi_U)} \!\!\!\frac{ v_k(c)}{\sqrt{N'_k(c)} }.
\end{eqnarray}

\subsubsection{The True Transition Matrix}\label{sec:MainTerm}
Let $\mP_k:=\big(p(s'|s,\tpi{k}(s))\big)_{s,s'}$ be the transition matrix of $\tpi{k}$ in the true MDP~$M$. We split 
  \begin{eqnarray} 
  \vv_k \big(  \tmP_k -\mI\big)\bw_k 
        & = & \vv_k \big(  \tmP_k - \mP_k\big)\bw_k 
         + \vv_k \big(  \mP_k -\mI\big)\bw_k . \label{eq:vPl}
  \end{eqnarray}
By assumption $\tMk, M \in {\mathcal{M}_{k}}$,
so that using \eqref{eq:civ} and \eqref{eq:probc2}  the first term in~(\ref{eq:vPl}) can be bounded by (cf.~Section~4.3.2 of~\cite{jaorau})
  \begin{eqnarray}
   \lefteqn{ \vv_k}&& \big(  \tmP_k - \mP_k\big)\bw_k 
     \;\leq\;  
    \sum_{s,a} v_k\big(s,a\big) \cdot 
             \big\| \tilde{p}_{k}(\cdot|s,a) - p(\cdot|s,a) \big\|_1 \cdot \| \bw_k \|_\infty \nonumber\\
    &\leq &
     2 \sum_{c\in C(\Psi_U)} v_k\big(c) \cdot
    \st  \left( \veps +
      \sqrt{
        \frac{56 B \log\left( 4 C_U T / \delta \right )}{N'_k(c)}
      } \right) \cdot \st\frac{D_\veps}{2} \nonumber \\
    &\leq &
      \veps D_\veps\, T_k +  D_\veps \sqrt{56B\log\left( \tfrac{2 C_U T}{\delta}  \right )}
        \sum_{c\in C(\Psi_U)} \frac{ v_k(c)}{\sqrt{N'_k(c)} }
    , \qquad\label{eq:zwi}
  \end{eqnarray}
since --- as for the rewards --- the contribution of state-action pairs in $\Psi_K$ is 0.

Concerning the second term in (\ref{eq:vPl}), as shown in Section~4.3.2 of~\cite{jaorau}
one has with probability at least $1-\tfrac{\delta}{12T^{5/4}}$
\begin{eqnarray}
 \sum_{k=1}^m &\vv_k&(\mP_k - \mI) \bw_k\Ind{\M\in{\mathcal{M}_{k}}}
\leq  D_\veps \sqrt{\tfrac{5}{2}T\log\left(\tfrac{8T}{\delta}\right)} 
           +  D_\veps \,C_U \log_2\big(\tfrac{8T}{C_U}\big), \label{eq:counts}
\end{eqnarray}
where $m$ is the number of episodes, and the bound $m \leq C_U\log_2\left(8T/{C_U}\right)$
used to obtain \eqref{eq:counts}
is derived analogously to Appendix~C.2 of \cite{jaorau}.

\subsubsection{Summing over Episodes with $M\in{\mathcal{M}_{k}}$}
\label{sec:together}
To conclude, we sum \eqref{eq:poisson3} over all
  episodes with ${\M\in{\mathcal{M}_{k}}}$, using \eqref{eq:vPl}, \eqref{eq:zwi}, and \eqref{eq:counts},
  which yields that with probability at least $1 -\tfrac{\delta}{12 T^{5/4}}$
  \begin{eqnarray} 
   \lefteqn{   \sum_{k=1}^m \Delta_k\Ind{\M\in\sM_{k}}
       \leq  \label{eq:CiHold}
         D_\veps \sqrt{\tfrac{5}{2}T \log\left(\tfrac{8T}{\delta}\right)} 
        +  D_\veps\, C_U\log_2\big(\tfrac{8T}{C_U}\big) 
         + \veps (D_\veps +2) T }
      \nonumber  \\ & & \mbox{}       
        + \left(D_\veps \sqrt{56B\log\left( \tfrac{2 C_U B T}{\delta}  \right )}
        +  \sqrt{ 14\log\left( \tfrac{2 C_U T}{\delta}  \right )} \right)
        \sum_{k=1}^m \sum_{c\in C(\Psi_U)} \frac{ v_k(c)}{\sqrt{N'_k(c)} }
        .\qquad
  \end{eqnarray}
As in Sect.~4.3.3 and Appendix C.3 of \cite{jaorau}, one obtains 
$\sum_{c\in C(\Psi_U)} \sum_k  \frac{v_k(c)}{\sqrt{N'_k(c)} } \;\leq\;  \left(\sqrt{2}+1\right)\sqrt{C_U T}$.
Thus, evaluating \eqref{eq:r-bound} by
  summing $\Delta_k$ over all episodes, by \eqref{eq:confidence} and \eqref{eq:CiHold} the regret is upper bounded with probability 
  $\geq 1 -\tfrac{\delta}{4 T^{5/4}}$ by
  \begin{eqnarray*} 
     \lefteqn{}    && \sum_{k=1}^m  \Delta_k\Ind{\M\notin\sM_{k}}
        + \sum_{k=1}^m \Delta_k\Ind{\M\in\sM_{k}}
        +  \sqrt{\tfrac{5}{8} T \log\left(\tfrac{8T}{\delta}\right)} 
      \label{eq:combined1} \\
      && \leq  
           \sqrt{\tfrac{5}{8}T \log\left(\tfrac{8T}{\delta}\right)} + \sqrt{T} 
        + D_\veps\sqrt{\tfrac{5}{2}T \log\left(\tfrac{8T}{\delta}\right)}
        +  D_\veps\, C_U \log_2\left(\tfrac{8T}{C_U}\right)
       \nonumber  \\ & &
        + \veps(D_\veps+2)T
        + 3 \big(\sqrt{2}+1\big) D_\veps \sqrt{14B C_U T\log\left( \tfrac{2 C_U B T}{\delta}  \right )} 
        \,. \nonumber
  \end{eqnarray*}
Further simplifications as in Appendix C.4 of \cite{jaorau} finish the proof.\qed
\end{appendix}

\end{document}